\newtheorem{theorem}{Theorem}
\newtheorem{proposition}{Proposition}
\newtheorem{assumption}{Assumption}
\newcommand{\algo}{CorN-DSGD}
\begin{document}
\title{Optimizing Privacy-Utility Trade-off in Decentralized Learning with Generalized Correlated Noise \vspace{-0.3cm}}
\author{%
    \IEEEauthorblockN{Angelo Rodio, Zheng Chen, and Erik G. Larsson}
    \IEEEauthorblockA{Dept. of Electrical Engineering (ISY), Link\"oping University, Sweden\\
                      Email: \{\emph{angelo.rodio, zheng.chen, erik.g.larsson}\}@liu.se\vspace{-0.9cm}}%
    \thanks{This work was supported in part by the KAW foundation, ELLIIT, and the Swedish Research Council (VR).}
}

\maketitle
\bstctlcite{BSTcontrol}

\thispagestyle{plain}
\pagestyle{plain}


\begin{abstract}
Decentralized learning enables distributed agents to collaboratively train a shared machine learning model without a central server, through local computation and peer-to-peer communication.
Although each agent retains its dataset locally, sharing local models can still expose private information about the local training datasets to adversaries.
To mitigate privacy attacks, a common strategy is to inject random artificial noise at each agent before exchanging local models between neighbors. However, this often leads to utility degradation due to the negative effects of cumulated artificial noise on the learning algorithm. 
In this work, we introduce \emph{\algo}, a novel covariance-based framework for generating correlated privacy noise across agents, which unifies several state-of-the-art methods as special cases.
By leveraging network topology and mixing weights, \emph{\algo} optimizes the noise covariance to achieve network-wide noise cancellation.
Experimental results show that \emph{\algo} cancels more noise than existing pairwise correlation schemes, improving model performance under formal privacy guarantees.

\end{abstract}

\section{Introduction}
\label{sec:intro}

Training machine learning models traditionally involves centralizing datasets on a single server, which raises scalability and privacy risks~\cite{shokriMembershipInferenceAttacks2017}.
To address these issues, \emph{federated learning} allows distributed agents 
to retain their data locally while sharing only data-dependent computations (e.g., local gradients or models) with a central server~\cite{mcmahanCommunicationEfficientLearningDeep2017, konecnyFederatedLearningStrategies2017}. 
\emph{Decentralized learning} further removes the need for a central server by allowing agents to update their local models and mix them directly with neighbors according to predefined mixing weights~\cite{lianCanDecentralizedAlgorithms2017, assranStochasticGradientPush2019a, koloskovaUnifiedTheoryDecentralized2020a, nedicDistributedGradientMethods2020}.
Although federated and decentralized learning algorithms avoid sharing raw data over the network, local models may still expose sensitive information to adversaries
through membership inference or gradient inversion attacks~\cite{maTrustedAIMultiagent2023}.

A widely adopted theoretical concept for mitigating such threats is \emph{differential privacy (DP)}, which provides formal privacy guarantees by injecting random noise into data-dependent computations~\cite{dworkAlgorithmicFoundationsDifferential2014}.
While the privacy-utility trade-off has been extensively studied in the \emph{central DP (CDP)} setting--where a server injects noise on the central model~\cite{bassilyPrivateEmpiricalRisk2014, abadiDeepLearningDifferential2016, mironovRenyiDifferentialPrivacy2017}--adapting DP to decentralized settings proves substantially more challenging.
Under the \emph{local DP (LDP)} setting~\cite{kasiviswanathanWhatCanWe2011, duchiLocalPrivacyStatistical2013}, each agent individually adds noise to its local model before mixing, protecting its local dataset from adversaries. However, since each agent adds noise independently without coordination, the utility of LDP degrades in terms of model performance~\cite{duchiMinimaxOptimalProcedures2018}.

Recent works on differentially private decentralized learning aim to improve LDP utility by combining (i) a small amount of independent noise per agent, sufficient to ensure local privacy, and (ii) a large amount of \emph{correlated noise} across agents, designed to cancel during mixing~\cite{imtiazDistributedDifferentiallyPrivate2021, biswasLowCostPrivacyAwareDecentralized2024, allouah2024privacy}.
In principle, this cancellation design preserves local privacy while reducing the overall noise after mixing, thereby improving the accuracy of the learned model.
Among these works, \cite{imtiazDistributedDifferentiallyPrivate2021, biswasLowCostPrivacyAwareDecentralized2024} rely on \emph{zero‐sum correlation}, which is well-suited for addressing honest-but-curious internal neighbors, but ineffective against external eavesdroppers.
Meanwhile, \cite{allouah2024privacy} proposes \emph{pairwise-canceling correlated noise}, which restricts noise correlation to pairs of neighboring agents, 
overlooking how local models are mixed across the network and limiting the potential for broader 
correlation design.

In this paper, we argue that extending beyond pairwise, neighbor-only correlations can significantly improve the privacy-utility trade-off in decentralized learning. 
The key contributions are summarized as follows:
\begin{itemize}
 \item We present a novel, \emph{covariance-based} framework for the design of correlated noise across agents, that recovers state-of-the-art approaches as special cases;
\item We generalize existing analyses on the privacy-utility trade-off, showing that the optimal covariance matrix should account for the network topology and mixing weights. Building on this insight, we propose the \emph{\algo} algorithm, which optimizes the noise covariance to achieve network-wide noise cancellation.
\end{itemize}
Experiments with various privacy budgets and network connectivity levels show that \emph{\algo} achieves superior privacy-utility trade-offs compared to pairwise approaches, especially in weakly connected networks.

\section{Privacy-Preserving Decentralized Learning}
\label{sec:problem}

We consider $n$ agents, $\mathcal{V} = \{ 1, \dots, n \}$, each with a local dataset $D_i$, aiming to learn the parameters $x \in \mathbb{R}^{d}$ of a shared machine learning model, by minimizing the global objective:
\begin{align}
    F(x) \triangleq \frac{1}{n} \sum_{i=1}^{n} \left[ F_i(x) \triangleq \frac{1}{|D_i|} \sum_{\xi_i \in D_i} \ell(x,\xi_i) \right], 
    \label{eq:g_obj} 
\end{align}
where $\ell(x,\xi_i)$ is the loss of parameter $x$ on sample~$\xi_i \in D_i$ and $F_i(x)$ is the local objective, known only to agent~$i$.
Agents communicate over a network modeled by an undirected graph $\mathcal{G} = (\mathcal{V}, \mathcal{E})$, where an edge $(i,j) \in \mathcal{E}$ indicates that agents $i$ and $j$ are neighbors, i.e., they can directly communicate.
For simplicity, we present the case of scalar parameters ($d=1$), with straightforward extension to vector parameters.



\subsection{Decentralized Stochastic Gradient Descent (DSGD)}


Problem~\eqref{eq:g_obj} is commonly solved via decentralized optimization algorithms like Decentralized Stochastic Gradient Descent (DSGD)~\cite{koloskovaUnifiedTheoryDecentralized2020a}. 
Each iteration~$t \in \{ 1, \dots, T \}$ involves two steps:

\begin{subequations}
\emph{a) Local step:} Each agent $i$ computes a stochastic gradient $g_i^{(t)} = \ell'(x_i^{(t)},\xi_i^{(t)})$, where $\xi_i^{(t)}$ is a data point sampled from agent~$i$'s dataset~$D_i$, and then updates its parameter: 
\begin{align}
    \textstyle
    x_i^{(t+\frac{1}{2})} = x_i^{(t)} - \eta_t g_i^{(t)},
\end{align}
where $\eta_t$ is the step-size. 

\emph{b) Mixing step:} Agents exchange parameters with neighbors over the network and compute a weighted average:
\begin{align}
    \textstyle
    {x}_i^{(t+1)} = \sum_{j=1}^n w_{ij} x_j^{(t+\frac{1}{2})},
\end{align}
where $w_{ij} \triangleq [\mathbf{W}]_{ij}$ are mixing weights defined by the mixing matrix $\mathbf{W} \in \mathbb{R}^{n \times n}$, with $w_{ij}=0$ whenever $(i,j)\notin\mathcal{E}$.
Convergence conditions for DSGD require $\mathbf{W}$ to be doubly stochastic ($\mathbf{W} \mathbf{1} = \mathbf{1}$ and $\mathbf{1}^\top \mathbf{W} = \mathbf{1}^\top$) and the second largest eigenvalue $\lambda_2(\mathbf{W}^\top \mathbf{W})$ strictly below one~\cite[Section II.B]{boydRandomizedGossipAlgorithms2006}.
\label{eq:DSGD}
\end{subequations}

\subsection{Differentially-Private DSGD}

In decentralized learning, we aim to protect agents’ local models from potential adversaries, which can be either \textit{external eavesdroppers} or \textit{curious internal agents}. For clarity of exposition, we address external eavesdroppers in the remainder of the paper and defer the discussion on honest-but-curious agents to Section~\ref{subsec:hbc}. Our privacy goal is to prevent the adversary from inferring whether any specific \emph{agent}—and by extension, its entire local dataset—is participating in the training process.\footnote{An alternative, weaker privacy objective is to protect each individual data sample rather than the entire dataset of an agent. Our framework extends to this sample-level DP setting in a straightforward manner.}



    Two sets of datasets $D = \{D_1, D_2, \dots, D_n\}$ and $D' = \{D'_1, D'_2, \dots, D'_n\}$ are \emph{agent-level neighbors} if they differ by exactly one agent's data ($D_j \neq D'_j$ for some $j$, and $D_i = D'_i$ for all $i\neq j$). A randomized mechanism $M(\cdot)$ is \emph{agent-level $(\varepsilon,\delta)$-DP} if, for all neighboring datasets $D, D'$ and all subsets $S$ of outputs, we have:
\begin{align}
        \mathrm{Pr}\bigl[M(D) \in S \bigr] 
        \leq
        e^\varepsilon \mathrm{Pr}\bigl[M(D') \in S \bigr] 
        + \delta.
\end{align}


In the absence of a central server, privacy guarantees must be enforced locally by each agent. A classical LDP mechanism is presented in~\cite{kasiviswanathanWhatCanWe2011}. At each iteration $t$, 
each agent clips its own local gradient: $\hat{g}_i^{(t)} = \min\{ 1, \frac{C}{|g_i^{(t)}|}\} \cdot g_i^{(t)}$. It then adds noise \emph{locally} and \emph{independently}: $\tilde{g}_i^{(t)} = \hat{g}_i^{(t)} + v_i^{(t)}$. Finally, it applies the mixing step: $\tilde{x}_i^{(t+1)} = \sum_{j=1}^{n} w_{ij}(\tilde{x}_j^{(t)}-\eta_t \tilde{g}_j^{(t)})$. 
Specifically, the LDP Gaussian mechanism adds noise $v_i^{(t)} \sim \mathcal{N}(0,\sigma_{\text{ldp}}^2)$, such that, for sufficiently large noise variance $\sigma_{\text{ldp}}^2$, all local models $\{ \tilde{x}_i^{(t+1)} \}_{i=1}^{n}$ are \emph{agent-level $(\varepsilon,\delta)$-LDP}~\cite{duchiLocalPrivacyStatistical2013}. 

Despite its strong privacy guarantees, this LDP mechanism suffers from utility degradation,
since the added independent privacy noise accumulates over the network, leading to higher variance in the mixed models, slower convergence, and reduced model performance~\cite{duchiMinimaxOptimalProcedures2018}.
Recognizing the utility degradation inherent to LDP, prior work~\cite{allouah2024privacy} proposed pairwise-canceling correlated noise across agents. In \emph{DECOR}~\cite{allouah2024privacy}, each agent adds noise $v_i^{(t)} = u_i^{(t)} + \sum_{j \in \mathcal{N}_i} v_{ij}^{(t)}$, where $u_i^{(t)} \sim \mathcal{N}(0,\sigma_{\text{pair}}^2)$ is independent local noise, and each pair of neighbors $(i,j)\in\mathcal{E}$ shares a correlated noise term $v_{ij}^{(t)} = -v_{ji}^{(t)} \sim \mathcal{N}(0,\sigma_{\text{cor}}^2)$ that cancels in the mixing~step. Distinct pairs of neighbors produce mutually uncorrelated components,
so that $v_{ij}^{(t)}$ is independent of $v_{k\ell}^{(t)}$ whenever $\{i,j\}\cap\{k,\ell\}=\emptyset$. 
However, DECOR restricts noise correlations to neighboring agents and offers fewer degrees of freedom in noise design, particularly in sparse networks.

This paper introduces a novel \emph{covariance-based} framework for generating correlated noise across agents, showing how the correlation structure should be designed by taking into account the effects of network topology and mixing weights.

\section{A Covariance-Based Framework for Generating Correlated Noise Across Agents}

We reinterpret the differentially private DSGD framework from a noise‐covariance perspective. Instead of drawing noise components $\{v_i^{(t)}\}_{i=1}^n$ independently or from pairwise correlations, we allow agents to sample from a multivariate Gaussian:
\begin{align}
    \mathbf{v}^{(t)} \;=\; \bigl[v_1^{(t)}, \dots, v_n^{(t)}\bigr]^\top \;\sim\; \mathcal{N}\!\bigl(\mathbf{0}, \mathbf{R}\bigr),
    \label{eq:R_noise}
\end{align}
where $\mathbf{R} \in \mathbb{R}^{n \times n}$ is an arbitrary covariance matrix. By defining $\tilde{\mathbf{x}}^{(t)} \triangleq [\tilde{x}_1^{(t)}, \dots, \tilde{x}_n^{(t)}]^\top$ and $\tilde{\mathbf{g}}^{(t)} = [\tilde{g}_1^{(t)}, \ldots, \tilde{g}_n^{(t)}]^\top$,
the perturbed model update at iteration $t$ becomes:
\begin{align}
    \tilde{\mathbf{x}}^{(t+1)} = \mathbf{W} \left( \tilde{\mathbf{x}}^{(t)} - \eta_t \tilde{\mathbf{g}}^{(t)} \right). 
    \label{eq:update_dp_DSGD}
\end{align}

In practice, each agent can generate $\mathbf{v}^{(t)}$ locally by sharing~$\mathbf{R}$ and a random seed $s$, drawing $\mathbf{s}^{(t)} \sim \mathcal{N}(\mathbf{0},\mathbf{I}_n)$ independently using the seed $s$, and locally computing $\mathbf{v}^{(t)} = \mathbf{R}^{1/2}\mathbf{s}^{(t)}$.

This covariance-based approach offers several advantages: \emph{(i)} it allows for more flexible correlation structures with noise cancellation beyond immediate neighbors; \emph{(ii)} it is directly applicable to directed graphs, unlike pairwise correlation; and \emph{(iii)} it recovers state-of-the-art approaches as special cases:
\begin{enumerate}
    \item \textit{LDP}. With independent noise, the covariance matrix is $\mathbf{R} = \sigma_{\text{ldp}}^2 \mathbf{I}_n$, where $\mathbf{I}_n \in \mathbb{R}^{n \times n}$ is the identity matrix;
    \item \textit{DECOR}. The independent noise term $u_i^{(t)} \sim \mathcal{N}(0,\sigma_{\text{pair}}^2)$ contributes $\sigma_{\text{pair}}^2\mathbf{I}_n$ to $\mathbf{R}$. In addition, each edge $(i,j)\in\mathcal{E}$ adds a pairwise correlated component $v_{ij}^{(t)} = -v_{ji}^{(t)}$ with variance $\sigma_{\text{cor}}^2$. In matrix form, each edge contributes $\sigma_{\text{cor}}^2(\mathbf{e}_i-\mathbf{e}_j)(\mathbf{e}_i-\mathbf{e}_j)^\top$, where $\mathbf{e}_i$ is the $i$-th standard basis vector, and summing over all edges yields $\sigma_{\text{cor}}^2\mathbf{L}$, where $\mathbf{L} = \sum_{(i,j)\in\mathcal{E}}
  (\mathbf{e}_i - \mathbf{e}_j)(\mathbf{e}_i - \mathbf{e}_j)^\top$ is the undirected graph Laplacian. The total covariance is $\mathbf{R} = \sigma_{\text{pair}}^2\mathbf{I}_n + \sigma_{\text{cor}}^2\mathbf{L}$.
\end{enumerate}

Constraining~$\mathbf{R}$ to the graph Laplacian $\sigma_{\text{cor}}^2\mathbf{L}$ limits noise correlation, particularly in sparse graphs. For instance, in a star topology, $\mathbf{R} = \sigma_{\text{pair}}^2\mathbf{I}_n + \sigma_{\text{cor}}^2\mathbf{L}$ forces $n(n-1) - 2(n-1)$ covariance entries to zero, and overlooks all leaf‑to‑leaf noise cancellations, which a general $\mathbf{R} \succeq 0$ can instead exploit. 

\section{Privacy-Utility Analysis}
\label{sec:analysis}


The covariance matrix $\mathbf{R}$ affects both utility (i.e., convergence rate) and privacy (i.e., $(\varepsilon,\delta)$-LDP guarantees).




\subsection{Utility Analysis}

We introduce a virtual sequence $\hat{\mathbf{x}}^{(t+1)} = \mathbf{W} (\tilde{\mathbf{x}}^{(t)} - \eta_t \hat{\mathbf{g}}^{(t)})$,
which begins at the noisy iterate \(\tilde{\mathbf{x}}^{(t)}\) but applies a noise-free DSGD step.
This construction isolates the error introduced by the privacy noise 
and simplifies the analysis in~\cite[Thm.~2]{allouah2024privacy}.


\begin{assumption}
\label{assump:noise_indep}
At iteration \(t\), the privacy noise \(\mathbf{v}^{(t)}\) is independent of the noise-free state \(\hat{\mathbf{x}}^{(t+1)}\) and prior states 
\(\mathcal{F}^{(t)}=\{(\mathbf{v}^{(1)},\hat{\mathbf{x}}^{(2)}),\dots,(\mathbf{v}^{(t-1)},\hat{\mathbf{x}}^{(t)})\}\):
$
  \mathbb{E}[\mathbf{v}^{(t)}|
  {\hat{\mathbf{x}}^{(t+1)}}, \mathcal{F}^{(t)}] 
  = \mathbf{0}.
$
\end{assumption}

\begin{theorem}
\label{thm:convergence}
Under Assumption~\ref{assump:noise_indep}, the expected squared error between consecutive iterates of our \emph{covariance-based} framework satisfies:
\begin{align}
    &\mathbb{E}\Bigl[\bigl\|\tilde{\mathbf{x}}^{(t+1)} - \tilde{\mathbf{x}}^{(t)}\bigr\|^2
    \;\Big|\;\mathcal{F}^{(t)}\Bigr] \notag \\
    &=\;
    \underbrace{
    \mathbb{E}\Bigl[\bigl\|\hat{\mathbf{x}}^{(t+1)} - \tilde{\mathbf{x}}^{(t)}\bigr\|^2
    \;\Big|\;\mathcal{F}^{(t)}\Bigr]
    }_{\substack{\text{Noise-free DSGD error} \\ \text{with clipped gradients}}}
    \;+\;
    \underbrace{
    \eta_t^2 \,\mathrm{Tr}\!\bigl(\mathbf{W}\mathbf{R}\mathbf{W}^\top\bigr)
    }_{\text{Privacy-noise variance}}.
    \label{eq:update_error}
\end{align}
\end{theorem}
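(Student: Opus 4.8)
The plan is to reduce the claim to one algebraic decomposition followed by two conditional-expectation computations, exploiting that the DSGD update is affine in the gradient. First I would establish the key identity
\[
\tilde{\mathbf{x}}^{(t+1)} = \hat{\mathbf{x}}^{(t+1)} - \eta_t \mathbf{W}\mathbf{v}^{(t)},
\]
which follows immediately from $\tilde{\mathbf{g}}^{(t)} = \hat{\mathbf{g}}^{(t)} + \mathbf{v}^{(t)}$ together with the definition of $\tilde{\mathbf{x}}^{(t+1)}$ in \eqref{eq:update_dp_DSGD} and of the virtual sequence $\hat{\mathbf{x}}^{(t+1)}$: both updates share the same pre-mixing iterate $\tilde{\mathbf{x}}^{(t)}$ and differ only through the injected noise, so the difference is exactly $-\eta_t\mathbf{W}\mathbf{v}^{(t)}$. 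Subtracting $\tilde{\mathbf{x}}^{(t)}$ from both sides expresses the noisy increment as the noise-free increment $\hat{\mathbf{x}}^{(t+1)} - \tilde{\mathbf{x}}^{(t)}$ minus the perturbation $\eta_t\mathbf{W}\mathbf{v}^{(t)}$.

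Next I would expand the squared Euclidean norm of this difference into three pieces --- the squared noise-free increment $\|\hat{\mathbf{x}}^{(t+1)} - \tilde{\mathbf{x}}^{(t)}\|^2$, the cross term $-2\eta_t (\hat{\mathbf{x}}^{(t+1)} - \tilde{\mathbf{x}}^{(t)})^\top \mathbf{W}\mathbf{v}^{(t)}$, and the quadratic noise term $\eta_t^2 \|\mathbf{W}\mathbf{v}^{(t)}\|^2$ --- and take $\mathbb{E}[\,\cdot \mid \mathcal{F}^{(t)}]$ of each. The first term reproduces the ``noise-free DSGD error'' summand verbatim, so no further work is needed there.

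The crux is showing the cross term vanishes, which is precisely where Assumption~\ref{assump:noise_indep} enters. I would apply the tower rule, conditioning additionally on $\hat{\mathbf{x}}^{(t+1)}$: given $\hat{\mathbf{x}}^{(t+1)}$ and $\mathcal{F}^{(t)}$, the row vector $(\hat{\mathbf{x}}^{(t+1)} - \tilde{\mathbf{x}}^{(t)})^\top\mathbf{W}$ is deterministic --- here one must note that $\tilde{\mathbf{x}}^{(t)}$ is $\mathcal{F}^{(t)}$-measurable, since the same decomposition at the previous step gives $\tilde{\mathbf{x}}^{(t)} = \hat{\mathbf{x}}^{(t)} - \eta_{t-1}\mathbf{W}\mathbf{v}^{(t-1)}$, and both $\hat{\mathbf{x}}^{(t)}$ and $\mathbf{v}^{(t-1)}$ are entries of $\mathcal{F}^{(t)}$. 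This deterministic factor pulls out of the inner conditional expectation, leaving $\mathbb{E}[\mathbf{v}^{(t)} \mid \hat{\mathbf{x}}^{(t+1)}, \mathcal{F}^{(t)}] = \mathbf{0}$, which annihilates the entire cross term.

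Finally, for the quadratic term I would write $\|\mathbf{W}\mathbf{v}^{(t)}\|^2 = \mathbf{v}^{(t)\top}\mathbf{W}^\top\mathbf{W}\mathbf{v}^{(t)}$ and use the scalar-trace identity $\mathbb{E}[\mathbf{v}^{(t)\top}\mathbf{W}^\top\mathbf{W}\mathbf{v}^{(t)} \mid \mathcal{F}^{(t)}] = \mathrm{Tr}(\mathbf{W}^\top\mathbf{W}\,\mathbb{E}[\mathbf{v}^{(t)}\mathbf{v}^{(t)\top}\mid\mathcal{F}^{(t)}])$. Because the noise is freshly drawn from $\mathcal{N}(\mathbf{0},\mathbf{R})$ per \eqref{eq:R_noise}, independently of the past, its conditional second moment equals $\mathbf{R}$, yielding $\mathrm{Tr}(\mathbf{W}^\top\mathbf{W}\mathbf{R})$; the cyclic property of the trace rearranges this into $\mathrm{Tr}(\mathbf{W}\mathbf{R}\mathbf{W}^\top)$, matching the target in \eqref{eq:update_error}. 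I expect the only genuine subtlety to be the cross-term argument: one must verify the $\mathcal{F}^{(t)}$-measurability of $\tilde{\mathbf{x}}^{(t)}$ and order the conditioning correctly so that Assumption~\ref{assump:noise_indep} applies, after which the remaining steps are routine linear algebra.
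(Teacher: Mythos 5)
Your proposal is correct and follows essentially the same route as the paper's proof: the identity $\tilde{\mathbf{x}}^{(t+1)}-\hat{\mathbf{x}}^{(t+1)}=-\eta_t\mathbf{W}\mathbf{v}^{(t)}$, the vanishing of the cross term via Assumption~\ref{assump:noise_indep}, and the trace identity for the quadratic noise term. You merely fill in details the paper's sketch leaves implicit (the tower-rule conditioning and the $\mathcal{F}^{(t)}$-measurability of $\tilde{\mathbf{x}}^{(t)}$), which is fine.
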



\begin{proof}[Proof Sketch] 
First, decompose \(\|\tilde{\mathbf{x}}^{(t+1)} - \tilde{\mathbf{x}}^{(t)}\|^2\) into the sum of \(\|\hat{\mathbf{x}}^{(t+1)} - \tilde{\mathbf{x}}^{(t)}\|^2\) and \(\|\tilde{\mathbf{x}}^{(t+1)} - \hat{\mathbf{x}}^{(t+1)}\|^2\). Taking conditional expectation, the cross‐term vanishes by Assumption~\ref{assump:noise_indep}.
Next, observe \(\|\tilde{\mathbf{x}}^{(t+1)} - \hat{\mathbf{x}}^{(t+1)}\|^2 = \eta_t^2\|\mathbf{W}\mathbf{v}^{(t)}\|^2\). Finally, 
\(\mathbf{R}=\mathbb{E}[\mathbf{v}^{(t)}(\mathbf{v}^{(t)})^\top]\)
yields the variance term 
\(\eta_t^2\,\mathrm{Tr}(\mathbf{W}\mathbf{R}\mathbf{W}^\top)\).
\end{proof}

Theorem~\ref{thm:convergence} shows that each iteration of our covariance-based framework incurs a baseline DSGD error (independent of \(\mathbf{R}\)) plus a variance term proportional to \(\mathrm{Tr}(\mathbf{W}\mathbf{R}\mathbf{W}^\top)\).

Our analysis captures the interplay between \(\mathbf{W}\) and \(\mathbf{R}\), explaining how correlated noise propagates and potentially cancels across agents.
Prior analyses overlooked this mutual dependency, which is central to our \emph{covariance‐based} view.

Additionally, the variance \(\mathrm{Tr}(\mathbf{W}\mathbf{R}\mathbf{W}^\top)\) is linear in $\mathbf{R}$: smaller $[\mathbf{R}]_{ii}$ components speed-up convergence. This implies that the optimal correlation structure should depend on the underlying network topology and mixing weights.

\subsection{Privacy Analysis}

We provide $(\varepsilon,\delta)$-LDP guarantees for our framework with arbitrary $\mathbf{R}\succ 0$. Theorem~\ref{thm:privacy_dp} bounds $\varepsilon$ in terms of number of iterations~\(T\), clipping threshold~\(C\), and inverse covariance~\(\mathbf{R}^{-1}\).

\begin{theorem}
\label{thm:privacy_dp}
Our \emph{covariance-based} framework is ($\varepsilon,\delta$)-LDP:
\begin{align}
    \textstyle
    \varepsilon \leq 2 C^2 T \max\limits_{i \in \mathcal{V}}[\mathbf{R}^{-1}]_{ii} + 2 C \sqrt{2T \log(\frac{1}{\delta}) \max\limits_{i \in \mathcal{V}}[\mathbf{R}^{-1}]_{ii}}.
    \label{eq:thm:privacy_dp}
\end{align}
\end{theorem}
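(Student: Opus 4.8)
The plan is to reduce the correlated-noise mechanism to a whitened (isotropic) Gaussian mechanism, bound its per-iteration R\'enyi divergence, compose over the $T$ iterations, and convert the resulting R\'enyi bound into an $(\varepsilon,\delta)$ statement. First I would invoke the post-processing property of DP: the eavesdropper's transcript consists of the transmitted messages $\tilde{x}_j^{(t)}-\eta_t\tilde{g}_j^{(t)}$, which are deterministic functions of the noisy gradients $\tilde{\mathbf{g}}^{(t)}=\hat{\mathbf{g}}^{(t)}+\mathbf{v}^{(t)}$; hence it suffices to bound the privacy of releasing $\{\tilde{\mathbf{g}}^{(t)}\}_{t=1}^{T}$. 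Conditioned on a dataset $D$, each $\tilde{\mathbf{g}}^{(t)}\sim\mathcal{N}(\hat{\mathbf{g}}^{(t)},\mathbf{R})$.

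Second, I would fix an agent $j$ and a pair of agent-level neighbors $D,D'$ differing only in $D_j$. The crucial observation is that, conditioned on an identical released transcript up to iteration $t-1$, the mixed models $\tilde{\mathbf{x}}^{(t)}$ are the same deterministic function of that transcript under both $D$ and $D'$; therefore the clipped gradients satisfy $\hat{g}_i^{(t)}(D)=\hat{g}_i^{(t)}(D')$ for every $i\neq j$, and the mean shift between the two conditional Gaussians is confined to coordinate $j$, i.e.\ $\hat{\mathbf{g}}^{(t)}(D)-\hat{\mathbf{g}}^{(t)}(D')=\Delta^{(t)}\mathbf{e}_j$ with $|\Delta^{(t)}|\le 2C$ by the clipping rule $|\hat{g}_j^{(t)}|\le C$.

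Third, I would compute the R\'enyi divergence of order $\alpha$ between the two conditional Gaussians. For Gaussians sharing covariance $\mathbf{R}$ and differing in mean by $\boldsymbol{\mu}$, this divergence equals $\tfrac{\alpha}{2}\,\boldsymbol{\mu}^\top\mathbf{R}^{-1}\boldsymbol{\mu}$; substituting $\boldsymbol{\mu}=\Delta^{(t)}\mathbf{e}_j$ yields $\tfrac{\alpha}{2}(\Delta^{(t)})^2[\mathbf{R}^{-1}]_{jj}\le 2\alpha C^2[\mathbf{R}^{-1}]_{jj}$. This is the step where the inverse covariance enters: a mean perturbation along a single coordinate is measured in the $\mathbf{R}^{-1}$ (Mahalanobis) metric, so the effective per-step noise protecting agent $j$ is governed by $[\mathbf{R}^{-1}]_{jj}$ rather than by the marginal variance $[\mathbf{R}]_{jj}$. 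Taking the worst agent gives a uniform per-step R\'enyi bound $2\alpha C^2\max_{i\in\mathcal{V}}[\mathbf{R}^{-1}]_{ii}$.

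Finally, I would apply adaptive composition of R\'enyi DP across the $T$ iterations---which simply sums the per-step bounds to $\varepsilon_R(\alpha)=2\alpha C^2 T\max_{i\in\mathcal{V}}[\mathbf{R}^{-1}]_{ii}$---and then convert to $(\varepsilon,\delta)$-DP via the standard inequality $\varepsilon\le\varepsilon_R(\alpha)+\log(1/\delta)/(\alpha-1)$. Writing $a=2C^2 T\max_{i\in\mathcal{V}}[\mathbf{R}^{-1}]_{ii}$, the bound reads $a\alpha+\log(1/\delta)/(\alpha-1)$; minimizing over $\alpha>1$ (optimum at $\alpha-1=\sqrt{\log(1/\delta)/a}$) gives $a+2\sqrt{a\log(1/\delta)}$, which is exactly the claimed~\eqref{eq:thm:privacy_dp}. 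The main obstacle I anticipate is the adaptive-composition argument underlying the second and fourth steps: I must argue carefully that conditioning on the common transcript renders the iterates identical across neighboring datasets, so that the per-round mean shift stays confined to coordinate $j$ and bounded by clipping, which is what legitimizes summing the per-round R\'enyi bounds despite the feedback of noisy gradients into later iterations.
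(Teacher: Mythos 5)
Your proposal is correct and follows essentially the same route as the paper: a per-iteration R\'enyi DP bound of $2\alpha C^2\max_{i\in\mathcal{V}}[\mathbf{R}^{-1}]_{ii}$, additive composition over $T$ rounds, conversion to $(\varepsilon,\delta)$-DP via the $\log(1/\delta)/(\alpha-1)$ penalty, and optimization over $\alpha$. The only difference is that you derive the per-step bound explicitly from the Gaussian R\'enyi divergence with sensitivity $2C$ confined to coordinate $j$, whereas the paper delegates that step to a citation of prior work.
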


\begin{proof}[Proof Sketch]
We adapt existing Rényi DP (RDP) arguments from~\cite{mironovRenyiDifferentialPrivacy2017, allouah2024privacy} to arbitrary $\mathbf{R}\succ 0$.  At each iteration, our framework satisfies $(\alpha,\alpha\varepsilon_{\text{step}})$-RDP with $\varepsilon_{\text{step}} = 2 C^2 \max_{i \in \mathcal{V}} [\mathbf{R}^{-1}]_{ii}$~\cite[Thm.~6]{allouah2024privacy}.
Since RDP composes additively over $T$ iterations~\cite[Prop.~1]{mironovRenyiDifferentialPrivacy2017}, the final iterate $\tilde{\mathbf{x}}^{(T)}$ satisfies ($\alpha, T \alpha \varepsilon_\text{step}$)-RDP. Converting RDP to $(\varepsilon,\delta)$-DP introduces an additional term $\ln(1/\delta)/(\alpha-1)$~\cite[Prop.~3]{mironovRenyiDifferentialPrivacy2017}. Optimizing the expression over $\alpha>1$ concludes the proof.
\end{proof}

Theorem~\ref{thm:privacy_dp} directly relates the diagonal elements of $\mathbf{R}^{-1}$, specifically $\max_{i\in\mathcal{V}} [\mathbf{R}^{-1}]_{ii}$, to the privacy budget~$\varepsilon$. This result immediately recovers well-known LDP bounds (e.g., \cite[Prop.~7]{mironovRenyiDifferentialPrivacy2017}) when $\mathbf{R}=\sigma_{\text{ldp}}^2 \mathbf{I}_n$: larger $[\mathbf{R}]_{ii}$ (i.e., larger variance of privacy noise) implies smaller $[\mathbf{R}^{-1}]_{ii}$, thus smaller $\varepsilon$ (that is, strengthened $(\varepsilon,\delta)$-LDP guarantees).

Invertibility (\(\mathbf{R}\succ 0\)) is a necessary condition for LDP guarantees under correlated noise: if $\mathbf{R}$ were singular, linear combinations of noise could cancel completely across agents, directly exposing their local models after the mixing step.

\section{The ``\algo'' Algorithm}
\label{sec:algo}

\begin{figure*}[tbp]
    \centering
    \hspace{-4.4cm}
    \includegraphics[width=0.5\linewidth]{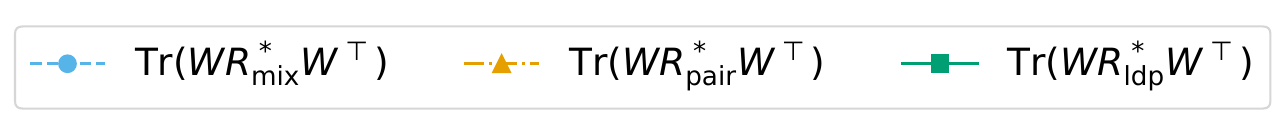}

    \begin{minipage}[t]{0.74\textwidth}
        \centering
        \begin{subfigure}[b]{0.32\linewidth}
            \centering
            \includegraphics[width=\linewidth]{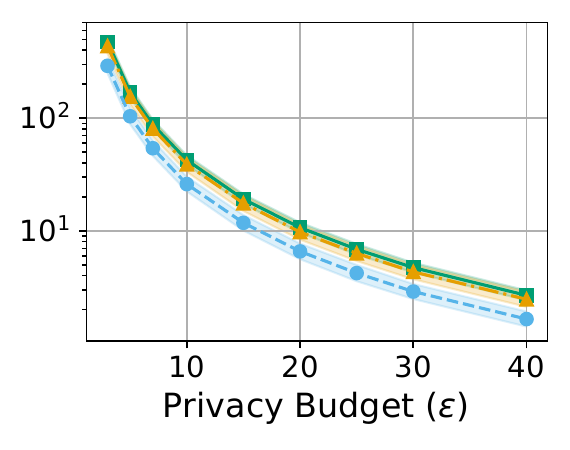}
            \caption{$n=20$, $p=0.5$.}
            \label{fig:sub1}
        \end{subfigure}
        \hfill
        \begin{subfigure}[b]{0.32\linewidth}
            \centering
            \includegraphics[width=\linewidth]{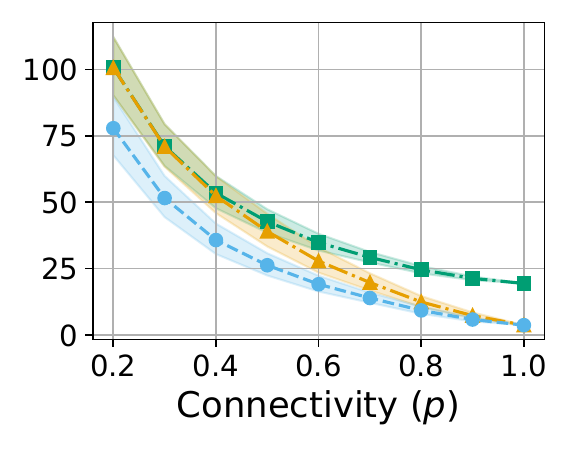}
            \caption{$n=20$, $\varepsilon=10$.}
            \label{fig:sub2}
        \end{subfigure}
        \hfill
        \begin{subfigure}[b]{0.32\linewidth}
            \centering
            \includegraphics[width=\linewidth]{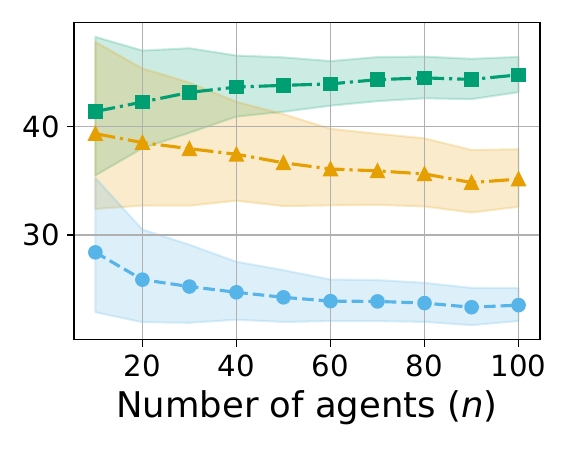}
            \caption{$p=0.5$, $\varepsilon=10$.}
            \label{fig:sub3}
        \end{subfigure}
        \caption{$\mathrm{Tr}(\mathbf{W}\mathbf{R}_{\text{mix}}\mathbf{W}^\top)$, $\mathrm{Tr}(\mathbf{W}\mathbf{R}_{\text{pair}}\mathbf{W}^\top)$, $\mathrm{Tr}(\mathbf{W}\mathbf{R}_{\text{ldp}}\mathbf{W}^\top)$ vs. varying: (a)~privacy budget~$\varepsilon$ at $p=0.5$, (b)~connectivity $p$ at $\varepsilon=10$, and (c)~number of agents $n$ at $p=0.5$, $\varepsilon=10$.}
        \label{fig:main}
    \end{minipage}
    \hfill
    \begin{minipage}[t]{0.24\textwidth}
        \centering
        \begin{subfigure}[b]{0.99\linewidth}
            \centering
            \includegraphics[width=\linewidth]{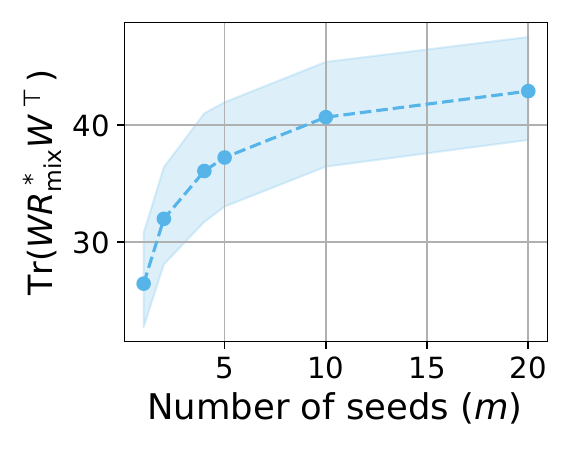}
            \caption{$n=20$, $p=0.5$, $\varepsilon=10$.}
            \label{fig:sub4}
        \end{subfigure}
        \caption{$\mathrm{Tr}(\mathbf{W}\mathbf{R}_{\text{mix}}\mathbf{W}^\top)$ vs. number of independent seeds.}
        \label{fig:main_right}
    \end{minipage}
\end{figure*}


\subsection{Privacy-Utility Trade-Off}

Theorems~\ref{thm:convergence} and~\ref{thm:privacy_dp} together highlight a fundamental tension between utility and privacy: smaller diagonal entries $[\mathbf{R}]_{ii}$ reduce the model-update variance $\mathrm{Tr}(\mathbf{W}\mathbf{R}\mathbf{W}^\top)$ in Eq.~\eqref{eq:update_error} (expediting convergence), yet they increase $[\mathbf{R}^{-1}]_{ii}$ and thus amplify $\varepsilon$ in Eq.~\eqref{eq:thm:privacy_dp} (weaker privacy). Conversely, making $[\mathbf{R}]_{ii}$ large improves privacy but slows convergence. A careful choice of~$\mathbf{R}$ is therefore crucial to optimize this trade-off. 
From our analysis, we draw the following guidelines:

\emph{(G1) Invertibility.} A necessary condition for $(\varepsilon,\delta)$-LDP is $\mathbf{R} \succ 0$ (Theorem~\ref{thm:privacy_dp}). For this reason, we partition~$\mathbf{R}$ into an \emph{independent} noise component ($\sigma_{\text{mix}}^2 \mathbf{I}_n$), that ensures invertibility, and an arbitrary \emph{correlated} component ($\mathbf{R}_{\text{cor}}$): 
\begin{align}
    \mathbf{R}_{\text{mix}} = \sigma_{\text{mix}}^2 \mathbf{I}_n + \mathbf{R}_{\text{cor}}. 
    \label{eq:R_mix}
\end{align}
The covariance $\mathbf{R}_{\text{mix}}$ in~\eqref{eq:R_mix} generalizes prior approaches as special cases: \emph{(i) LDP}: $\mathbf{R}_{\text{cor}} = \mathbf{0}$; \emph{(ii) DECOR}: $\mathbf{R}_{\text{cor}} = \sigma_{\text{cor}}^2 \mathbf{L}$.
We remark $\sigma_{\text{mix}}^2>0$; if not, $\mathbf{R}_{\text{mix}}$ would be singular, and $\mathbf{v} \sim \mathcal{N}(\mathbf{0}, \mathbf{R}_{\text{mix}})$ could make $\mathbf{W}\mathbf{v} = \mathbf{0}$, i.e., no privacy noise. 

\emph{(G2) Optimization Problem.} We optimize $\mathbf{R}_{\text{mix}}$ to balance utility (Theorem~\ref{thm:convergence}) and $(\varepsilon,\delta)$-LDP privacy (Theorem~\ref{thm:privacy_dp}):
\begin{subequations}
\begin{align}
    \underset{\mathbf{R}_{\text{mix}} \succ 0}{\text{minimize}} 
    \quad & \mathrm{Tr}\bigl(\mathbf{W}\mathbf{R}_{\text{mix}}\mathbf{W}^\top\bigr) 
    \label{opt:trade_off}\\
    \text{subject to}\quad 
    & [\mathbf{R}_{\text{mix}}^{-1}]_{ii} \le \frac{\varepsilon^2}{16C^2T\log(1/\delta)},~\forall\,i \in \mathcal{V}.
    \label{opt:constraint}
\end{align}
\label{opt:both}
\end{subequations}
Problem~\eqref{opt:both} can be formulated as a convex semidefinite program (SDP). The objective is linear in $\mathbf{R}_{\text{mix}}$, and the mapping $\mathbf{R}_{\text{mix}} \mapsto \mathbf{R}_{\text{mix}}^{-1}$ is matrix-convex on $\mathbf{R}_{\text{mix}}\succ0$~\cite[§4.6.2]{boydConvexOptimization2004}. By applying Schur‐complement constraints, each bound on $[\mathbf{R}_{\text{mix}}^{-1}]_{ii}$ becomes a linear matrix inequality. Thus, Problem~\eqref{opt:both} can be solved efficiently by standard SDP solvers (e.g., MOSEK~\cite{MOSEK}).
Also, our SDP formulation is scalable, as we verify numerically with networks of up to 100 agents (Fig.~\ref{fig:sub3}).
Solving~\eqref{opt:both} yields the optimal solutions $\sigma_{\text{mix}}^{2,\star}$ and $\mathbf{R}_{\text{cor}}^\star$, from which we compute $\mathbf{R}_{\text{mix}}^\star=\sigma_{\text{mix}}^{2,\star}\mathbf{I}_n+\mathbf{R}_{\text{cor}}^\star$.





\begin{proposition}
\label{cor:trade_off}
Problem~\eqref{opt:both} attains the \emph{largest} utility
among all covariances satisfying~\eqref{eq:R_mix}; every such solution is $(\varepsilon,\delta)$‑LDP.
\end{proposition}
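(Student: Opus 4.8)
The plan is to split the statement into its two assertions---privacy validity of every feasible point, and utility optimality of the minimizer---and to obtain each by reading off the relevant earlier theorem. The common thread is the observation, from Theorem~\ref{thm:convergence}, that the \emph{only} $\mathbf{R}$-dependent contribution to the per-iteration error in~\eqref{eq:update_error} is the variance term $\eta_t^2\,\mathrm{Tr}(\mathbf{W}\mathbf{R}\mathbf{W}^\top)$; the noise-free DSGD term is independent of $\mathbf{R}$. Hence, with the step-sizes and the noise-free dynamics held fixed, utility is a monotonically decreasing function of $\mathrm{Tr}(\mathbf{W}\mathbf{R}_{\text{mix}}\mathbf{W}^\top)$, so ``maximizing utility'' is literally equivalent to minimizing the objective~\eqref{opt:trade_off}. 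This reduces the optimality assertion to a statement about the minimizer of the SDP over its feasible set.

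For the privacy assertion, I would first note that $\sigma_{\text{mix}}^2>0$ in~\eqref{eq:R_mix} guarantees $\mathbf{R}_{\text{mix}}\succ0$ (guideline G1), so $\mathbf{R}_{\text{mix}}^{-1}$ exists and Theorem~\ref{thm:privacy_dp} applies. Any feasible point satisfies~\eqref{opt:constraint}, i.e. $\max_{i}[\mathbf{R}_{\text{mix}}^{-1}]_{ii}\le \varepsilon^2/(16C^2T\log(1/\delta))$. Since the right-hand side of~\eqref{eq:thm:privacy_dp} is monotonically increasing in $\max_i[\mathbf{R}^{-1}]_{ii}$, it suffices to evaluate it at this upper limit; substituting and simplifying then shows the resulting privacy budget does not exceed $\varepsilon$, so every feasible covariance is $(\varepsilon,\delta)$-LDP.

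For the utility assertion, I would invoke convexity: as already argued below~\eqref{opt:both}, the objective is linear and each constraint becomes a linear matrix inequality via a Schur complement, so~\eqref{opt:both} is a convex SDP admitting a global minimizer $\mathbf{R}_{\text{mix}}^\star$. The feasible region of~\eqref{opt:both} is exactly the set of covariances of the form~\eqref{eq:R_mix} obeying the privacy constraint~\eqref{opt:constraint}; the minimizer attains the smallest $\mathrm{Tr}(\mathbf{W}\mathbf{R}_{\text{mix}}\mathbf{W}^\top)$ over that region and hence, by the monotone utility--variance correspondence above, the largest utility among all such covariances. Combining with the privacy assertion yields the proposition.

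The main obstacle is the constant-level verification hidden in the privacy step: one must check that the factor $16$ in~\eqref{opt:constraint} is a sufficiently conservative choice for the \emph{sum} of the two terms in~\eqref{eq:thm:privacy_dp} to stay below $\varepsilon$. At the constraint boundary the dominant square-root term evaluates to $\varepsilon/\sqrt{2}$, and the remaining slack $\varepsilon(1-1/\sqrt{2})$ must absorb the quadratic term $\varepsilon^2/(8\log(1/\delta))$, which holds whenever $\varepsilon$ does not exceed a constant multiple of $\log(1/\delta)$, as in the standard DP regime. By contrast, the optimality half is essentially immediate once utility is recognized as monotone in the SDP objective, so no additional work is expected there.
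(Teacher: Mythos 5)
Your proposal is correct and follows essentially the same route as the paper's proof: the objective of~\eqref{opt:both} is identified with the $\mathbf{R}$-dependent variance term in Theorem~\ref{thm:convergence}, and substituting the constraint~\eqref{opt:constraint} into~\eqref{eq:thm:privacy_dp} yields exactly the bound $\varepsilon^\star \le \varepsilon^2/(8\log(1/\delta)) + \varepsilon/\sqrt{2} \le \varepsilon$ that the paper states. Your closing caveat is in fact sharper than the paper itself: the final inequality holds only when $\varepsilon \le 8(1-1/\sqrt{2})\log(1/\delta)$, a regime condition the paper's proof silently assumes.
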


\begin{proof}
First, the objective in~\eqref{opt:trade_off} minimizes the privacy-noise variance in Theorem~\ref{thm:convergence}, Eq.~\eqref{eq:update_error}. Second, applying Eq.~\eqref{opt:constraint} to the privacy bound in Theorem~\ref{thm:privacy_dp}, Eq.~\eqref{eq:thm:privacy_dp} shows that
our framework is ($\varepsilon^*,\delta$)-LDP, where $\varepsilon^* \leq \frac{\varepsilon^2}{8\log(1/\delta)} + \frac{\varepsilon}{\sqrt{2}} \leq \varepsilon$.
\end{proof}

\subsection{Correlated Noise DSGD (\algo)}

\begin{algorithm}[t]
    \caption{\textsc{\algo}}
    \label{alg:whisper_dsgd}
    \begin{algorithmic}[1]
    \Statex \textbf{Input:} mixing matrix $\mathbf{W}$, $(\varepsilon,\delta)$-DP parameters, number of iterations $T$, clipping threshold $C$, random seed $s$.
    \State Initialize
    \(\!\mathbf{R}_{\text{mix}} = \sigma_{\text{mix}}^2 \mathbf{I}_n + \mathbf{R}_{\text{cor}}\)
    \Comment{guideline \emph{(G1)}} \label{alg:line1}
    \State Optimize~\eqref{opt:both} to find 
    \(\mathbf{R}_{\text{mix}}^\star\)  
    \Comment{guideline \emph{(G2)}} \label{alg:line2}
    \State Share 
    (\(\mathbf{R}_{\text{mix}}^\star\), $s$) among all agents \label{alg:line3}
    \For{$t$ \textbf{in} $0 \dots T-1$} \label{alg:line4}
            \State $\hat{\mathbf{g}}^{(t)} = \mathrm{clip}\bigl(\mathbf{g}^{(t)}; C\bigr)$ \Comment{clipped gradient}
            \State $\mathbf{v}^{(t)} = \mathbf{R}_{\text{mix}}^{\star 1/2}\mathbf{s}^{(t)}$, $\mathbf{s}^{(t)} \sim \mathcal{N}(\mathbf{0},\mathbf{I}_n)$ \Comment{privacy noise} \label{alg:line6}
            \State $\tilde{\mathbf{g}}^{(t)} = \hat{\mathbf{g}}^{(t)} + \mathbf{v}^{(t)}$ \Comment{perturbed gradient}
            \State $\tilde{\mathbf{x}}^{(t+1)} = \mathbf{W} \left( \tilde{\mathbf{x}}^{(t)} - \eta_t \tilde{\mathbf{g}}^{(t)} \right)$ \Comment{perturbed update}
        \EndFor \label{alg:line9}
        \State \Return $\tilde{\mathbf{x}}^{(T)}$
    \end{algorithmic}
\end{algorithm}

We now present \emph{\algo} (Algorithm~\ref{alg:whisper_dsgd}), a novel differentially private DSGD algorithm that integrates our covariance-based framework with the theoretical guidelines.

\emph{\algo}: \emph{(i)}~initializes $(\sigma_{\text{mix}}^2, \mathbf{R}_{\text{cor}})$ so that $\mathbf{R}_{\text{mix}}$ is invertible (line~\ref{alg:line1});
\emph{(ii)}~optimizes Problem~\eqref{opt:both} via standard SDP solvers (line~\ref{alg:line2});
\emph{(iii)}~shares the optimal $\mathbf{R}_{\text{mix}}^\star$ and a global random seed~$s$ among all agents (line~\ref{alg:line3});
\emph{(iv)}~executes the standard differentially-private DSGD algorithm (lines~\ref{alg:line4}--\ref{alg:line9}), except agents add privacy noise $\mathbf{v}^{(t)} = \mathbf{R}_{\text{mix}}^{\star 1/2}\mathbf{s}^{(t)}$, where $\mathbf{s}^{(t)} \sim \mathcal{N}(\mathbf{0},\mathbf{I}_n)$ is locally generated from seed $s$ (line~\ref{alg:line6}).

Once agents share $\mathbf{R}_{\text{mix}}^\star$ and seed $s$, \emph{\algo} runs fully decentralized, incurring no extra communication overhead than baseline DSGD (lines~\ref{alg:line4}--\ref{alg:line9}). However, computing $\mathbf{R}_{\text{mix}}^\star$ requires either a centralized or a distributed procedure (lines~\ref{alg:line1}--\ref{alg:line3}). In the \emph{centralized} approach, a single node with knowledge of \(\mathbf{W}\) and parameters \((\varepsilon,\delta,T,C)\) solves~\eqref{opt:both} and broadcasts $(\mathbf{R}_{\text{mix}}^\star,s)$ to all agents. Alternatively, a \emph{distributed} setup involves an initial consensus phase on $\mathbf{W}$ and $s$, then each agent locally solves~\eqref{opt:both}. This approach removes reliance on any single node but incurs an initial communication overhead.

\subsection{Comparison with Related Works}
\label{subsec:related_work}

\begin{figure*}[tbp] 
    \centering
    \hspace{5ex}
    \includegraphics[width=0.38\linewidth]{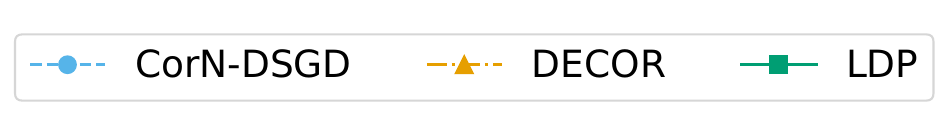}
    
    \begin{minipage}[t]{0.497\textwidth}
        \centering
        \begin{subfigure}[b]{0.494\linewidth}
            \centering
            \includegraphics[width=\linewidth]{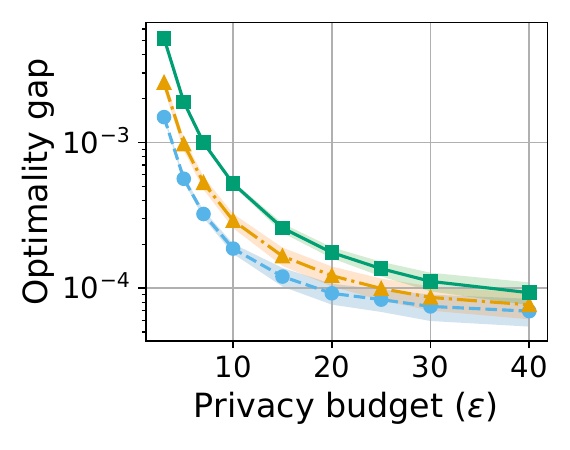}
            \caption{Synthetic, $p=0.5$.}
            \label{fig:1a}
        \end{subfigure}
        \hfill
        \begin{subfigure}[b]{0.494\linewidth}
            \centering
            \includegraphics[width=\linewidth]{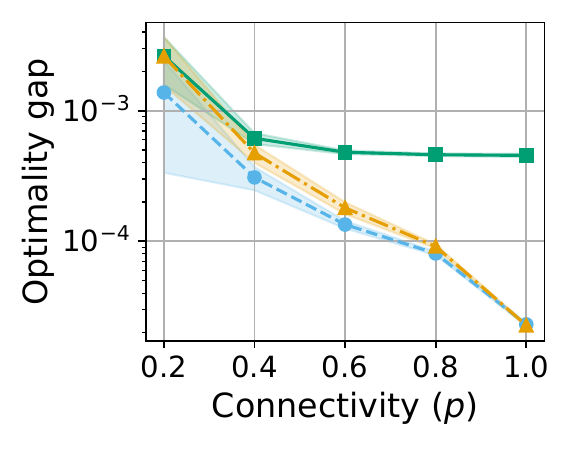}
            \caption{Synthetic, $\varepsilon=10$.}
            \label{fig:1b}
        \end{subfigure}
        \caption{\textbf{Quadratic Optimization.} Optimality gap versus (a) privacy budget $\varepsilon$ at $p=0.5$ and (b) connectivity $p$ at $\varepsilon=10$.}
        \label{fig:main1}
    \end{minipage}
    \hfill
    \begin{minipage}[t]{0.497\textwidth}
        \centering
        \begin{subfigure}[b]{0.494\linewidth}
            \centering
            \includegraphics[width=\linewidth]{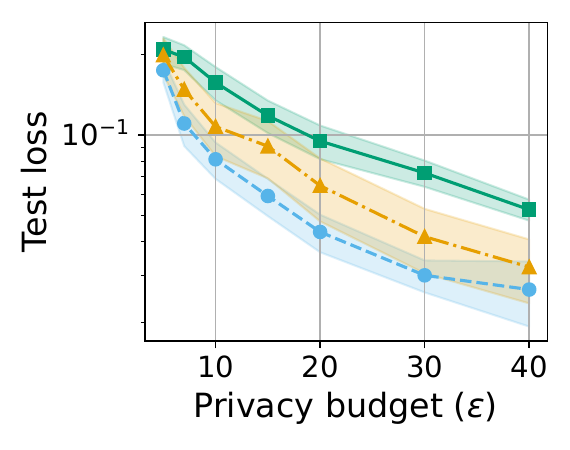}
            \caption{a9a dataset, $p=0.5$.}
            \label{fig:2a}
        \end{subfigure}
        \hfill
        \begin{subfigure}[b]{0.494\linewidth}
            \centering
            \includegraphics[width=\linewidth]{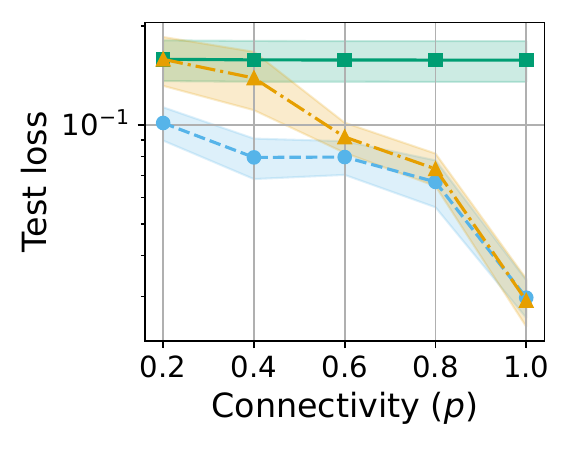}
            \caption{a9a dataset, $\varepsilon=10$.}
            \label{fig:2b}
        \end{subfigure}
        \caption{\textbf{Logistic Regression.} Test loss versus (a)~privacy budget~$\varepsilon$ at $p=0.5$ and (b)~connectivity $p$ at $\varepsilon=10$.}
        \label{fig:main2}
    \end{minipage}
\end{figure*}

\begin{figure}[t] 
    \begin{minipage}[t]{0.497\textwidth}
        \centering
        \begin{subfigure}[b]{0.494\linewidth}
            \centering
            \includegraphics[width=\linewidth]{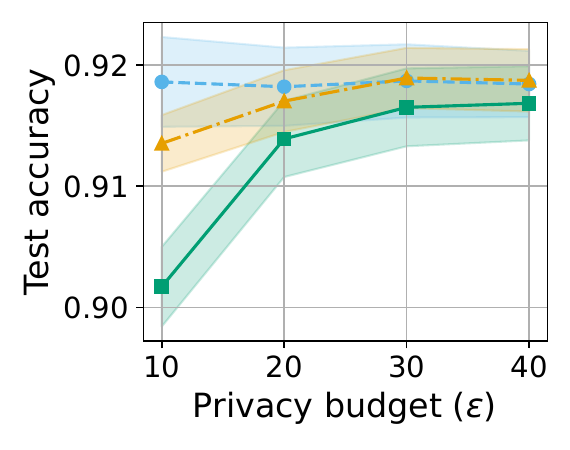}
            \caption{MNIST, $p=0.5$.}
            \label{fig:3a}
        \end{subfigure}
        \hfill
        \begin{subfigure}[b]{0.494\linewidth}
            \centering
            \includegraphics[width=\linewidth]{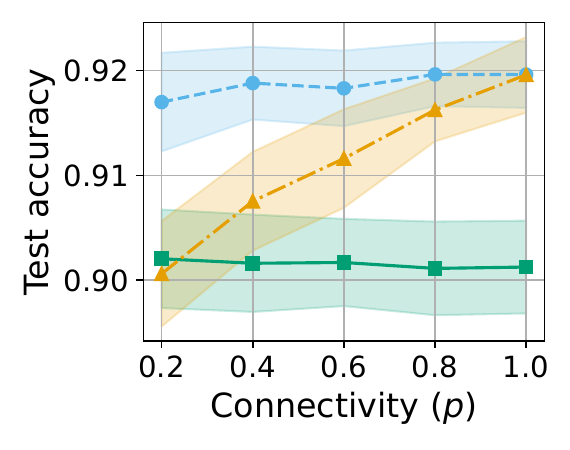}
            \caption{MNIST, $\varepsilon=10$.}
            \label{fig:3b}
        \end{subfigure}
        \caption{\textbf{Image Classification.} Test accuracy versus (a) privacy budget $\varepsilon$ at $p=0.5$ and (b) connectivity $p$ at $\varepsilon=10$.}
        \label{fig:main3}
    \end{minipage}
    \vspace{-\baselineskip}
\end{figure}

We compare $\mathrm{Tr}(\mathbf{W}\mathbf{R}_{\text{mix}}\mathbf{W}^\top)$ (the effective noise variance in \emph{\algo}) against the following state‐of‐the‐art baselines:
\emph{(i) pairwise correlation}, by replacing \(\mathbf{R}_{\text{mix}}\) in \eqref{opt:both} with \(\mathbf{R}_{\text{pair}} = \sigma_{\text{pair}}^2\mathbf{I}_n + \sigma_{\text{cor}}^2\mathbf{L}\), 
and \emph{(ii) LDP}, with \(\mathbf{R}_{\text{ldp}} = \sigma_{\text{ldp}}^2\mathbf{I}_n\).

Figure~\ref{fig:main} reports solutions of Problem~\eqref{opt:both} for $\mathbf{R}_{\text{mix}}$, $\mathbf{R}_{\text{pair}}$, and $\mathbf{R}_{\text{ldp}}$ in an Erdős--Rényi graph under varying privacy budgets \(\varepsilon\) (Fig.~\ref{fig:sub1}), network connectivity \(p\) (Fig.~\ref{fig:sub2}), and number of agents \(n\) (Fig.~\ref{fig:sub3}). 
The fully independent LDP noise imposes the highest variance $\mathrm{Tr}(\mathbf{W}\mathbf{R}_{\text{ldp}}\mathbf{W}^\top)$, as it forgoes any correlation-based cancellation.
As \(\varepsilon\) and \(p\) increase (weaker privacy requirements or denser connectivity),
correlation‐based schemes ($\mathbf{R}_{\text{pair}}$ and~$\mathbf{R}_{\text{mix}}$) become more effective, and $\mathbf{R}_{\text{mix}}$ consistently achieves lower variance than~$\mathbf{R}_{\text{pair}}$. This gap is particularly large for sparser networks (lower $p$), since \(\mathbf{R}_{\text{mix}}\) leverages broader covariance structures (through \(\mathbf{R}_{\text{cor}}\)), rather than just the graph Laplacian~\(\mathbf{L}\).

\subsection{Honest-But-Curious (HBC) Setting}
\label{subsec:hbc}

We extend Problem~\eqref{opt:both} to account for honest-but-curious (HBC) agents. 
Let $\mathcal{S} = \{ s_k \}_{k=1}^{m}$ be $m$ independent seeds. Seed $s_k$ is known to the (possibly overlapping) set of agents $\mathcal H_k \subseteq \mathcal{V}$. The covariance is decomposed as: $\mathbf{R}_{\text{cor}}=\sum_{k=1}^{m} \mathbf{R}_{\text{cor}}^{(k)}$, with $\mathrm{supp}(\mathbf{R}_{\text{cor}}^{(k)}) \subseteq \mathcal{H}_{k} \times \mathcal{H}_{k}$. Thus the pair $(\mathbf{R}_{\text{cor}}^{(k)}, s_k)$ is hidden from every agent outside $\mathcal H_k$.

For any adversarial coalition $I \subseteq \mathcal{V}$, define the set of unknown seeds $\mathcal{U}(I) \coloneqq \{ k : \mathcal{H}_{k} \cap I = \emptyset \}$. The covariance \emph{unknown} to $I$ is $\mathbf{R}_{\text{mix}}^{(I)} \coloneqq \sigma_{\text{mix}}^2 \mathbf{I}_n + \sum_{k \in \mathcal{U}(I)} \mathbf{R}_{\text{cor}}^{(k)}$. 
We protect against coalitions of size at most~$q$: $\mathcal{C}_{q} \coloneqq \{ I \subseteq \mathcal{V} : |I| \leq q \}$.

In this context, the utility analysis in Theorem~\ref{thm:convergence} remains unchanged, while Theorem~\ref{thm:privacy_dp} requires only minor adaptation, as each coalition $I \in \mathcal{C}_{q}$ imposes its own privacy constraint:
\begin{subequations}
\begin{align}
    \textstyle
    \underset{\sigma_{\text{mix}}^2 > 0, \{ \mathbf{R}_{\text{cor}}^{(k)} \}}{\text{minimize}} 
    \quad & \mathrm{Tr}\bigl(\mathbf{W}\mathbf{R}_{\text{mix}}\mathbf{W}^\top\bigr)
    \label{opt:trade_off_hbc}\\
    \text{subject to}\quad 
    & \textstyle
    \mathbf{R}_{\text{mix}} = \sigma_{\text{mix}}^2 \mathbf{I}_n + \sum_{k=1}^{m} \mathbf{R}_{\text{cor}}^{(k)}, \\
    & \textstyle 
    \mathbf{R}_{\text{cor}}^{(k)} \succeq \mathbf{0},~\mathrm{supp}(\mathbf{R}_{\text{cor}}^{(k)}) \subseteq \mathcal{H}_{k} \times \mathcal{H}_{k}, \\
    & \textstyle
    [(\mathbf{R}_{\text{mix}}^{(I)})^{-1}]_{ii} \le \frac{\varepsilon^2}{16C^2T\log(1/\delta)}, \\
    & \forall I \in \mathcal{C}_{q}, i \not \in I. \notag
    \label{opt:constraint_hbc}
\end{align}
\label{opt:both_hbc}
\end{subequations}
Problem~\eqref{opt:both_hbc} is  solvable in polynomial time. Special cases are: \emph{(i)} external eavesdropper ($q=0$, $m=1$); \emph{(ii)} HBC without collusion ($q=1$); \emph{(iii)} \emph{DECOR} ($q \geq 1$, $m=|\mathcal{E}|$); \emph{(iv)} \emph{LDP} ($q=n-1$, $m=n$).
Fig.~\ref{fig:main_right} shows the value of $\mathrm{Tr}(\mathbf{W}\mathbf{R}_{\text{mix}}\mathbf{W}^\top)$ for different numbers of independent seeds ($m\in\{1,\dots,n\}$) and disjoint groups of agents ($\bigcap_{k}\mathcal H_k=\emptyset$). 

\vspace{-0.2cm}
\section{Experimental Evaluation}

\label{sec:experiments}



\subsection{Experimental Setup}

\subsubsection{Network and DP parameters}
We simulate $n = 20$ agents on an Erd\H{o}s–R\'enyi graph with varying connectivity $p \in \{0.2, 0.4, 0.6, 0.8, 1.0\}$. We use Metropolis–Hastings weights: $w_{ij}=\tfrac{1}{|\mathcal{N}_i|+1}$ if $j\in\mathcal{N}_i$~\cite{boydRandomizedGossipAlgorithms2006}. For comparison with~\cite{allouah2024privacy}, we focus on external eavesdroppers observing all agents' communications ($q=0$, $m=1$), and we set~DP~parameters $\varepsilon \in \{3,5,7,10,15,20,25,30,40\}$, $\delta=10^{-5}$, $C=0.1$.

\subsubsection{Algorithms} 
We compare 
\emph{(i)}~\emph{LDP}~\cite{kasiviswanathanWhatCanWe2011, duchiLocalPrivacyStatistical2013}, which adds independent local noise;
\emph{(ii)}~\emph{DECOR}~\cite{allouah2024privacy}, representative of pairwise correlation;
and \emph{(iii)} \emph{\algo}, our covariance-based approach.
All algorithms run for $T=5000$ iterations. 
Results are averaged over 10 random seeds.

\subsubsection{Tasks}
We consider three machine learning tasks:

\noindent\textbf{(i) Quadratic Optimization} with strictly convex objectives:
\begin{align}
      f_i(x_1,x_2) = 
  \begin{cases}
    15(x_1+i)^2 + x_2^2, & i=1,\dots,\tfrac{n}{2}, \\
    15(x_1-i)^2 + x_2^2, & i=\tfrac{n}{2}+1,\dots,n.
  \end{cases}
\end{align}
To introduce data heterogeneity, we rotate $f_i$, $i=\tfrac{n}{2}+1,\dots,n$ around the local minimizers $(i,0)$ by an angle $\theta=15^{\circ}$~\cite{alshedivatFederatedLearningPosterior2020}.
We use a diminishing step size \(\eta_t = 1/\sqrt{t}\), with \(\eta_1=10^{-2}\).

\noindent\textbf{(ii) Logistic Regression.} We train a regularized linear model on the a9a LIBSVM dataset~\cite{changLIBSVMLibrarySupport2011}, with an 80-20 train-test split. 
We simulate data heterogeneity by randomly partitioning samples among agents using a Dirichlet distribution with parameter 10~\cite{allouah2024privacy}.
We tune the step-size in $\{10^{-1},5\cdot10^{-2},10^{-2},5\cdot10^{-3},10^{-3}\}$ and fix batch size to 128.

\noindent\textbf{(iii) Image Classification.} We train a shallow neural network on the MNIST dataset~\cite{lecun-mnisthandwrittendigit-2010}, with the same train-test split, data partitioning scheme, and hyperparameter grid as in task~(ii).

\subsection{Experimental Results}

Figures~\ref{fig:main1}--\ref{fig:main3} show the empirical privacy-utility trade-off on (i) the quadratic optimization task (reporting optimality gap $\frac{1}{n}\!\sum_{i=1}^{n} f_i(x_i^{(T)}) - f^*$), (ii) logistic regression task (reporting test loss), and (iii) image classification task (reporting test accuracy). Panel (a) of each figure varies the privacy budget~\(\varepsilon\) under fixed connectivity \(p=0.5\), while panel (b) varies \(p\) under fixed \(\varepsilon=10\).
Across all \(\varepsilon\) values, both correlation-based approaches (\emph{DECOR} and \emph{\algo}) outperform \emph{LDP} (Figs.~\ref{fig:main1}--\ref{fig:3a}).
\emph{\algo} consistently outperforms \emph{DECOR} for all network connectivities (Figs.~\ref{fig:main1}--\ref{fig:3b}). For fully connected networks (\(p=1\)), \(\mathbf{R}_{\text{cor}}\) and \(\mathbf{L}\) become proportional, therefore both methods achieve nearly the same utility. However, at lower connectivities (\(p \le 0.4\)), \emph{\algo} outperforms \emph{DECOR} by the largest margin, confirming the benefits of our covariance-based framework compared to prior approaches in sparse networks.

\section{Conclusion}

In serverless decentralized learning, sensitive information about training data can be exposed through shared local models. While LDP mechanisms mitigate such privacy threats, they suffer from reduced utility due to independently added local privacy noise.
We propose the \emph{\algo} algorithm, which generates correlated noise across agents using a covariance-based approach. This method incorporates the mixing weights into the design of correlated noise, offering strong privacy guarantees with minimal impact on utility. Both theoretical analysis and empirical results demonstrate that \emph{\algo} improves the privacy-utility trade-off compared to existing methods (\emph{LDP}, \emph{DECOR}) under various privacy budgets, particularly in sparse network topologies.



\clearpage
\bibliographystyle{IEEEtran}
\bibliography{references}

\begin{thebibliography}{10}
\providecommand{\url}[1]{#1}
\csname url@samestyle\endcsname
\providecommand{\newblock}{\relax}
\providecommand{\bibinfo}[2]{#2}
\providecommand{\BIBentrySTDinterwordspacing}{\spaceskip=0pt\relax}
\providecommand{\BIBentryALTinterwordstretchfactor}{4}
\providecommand{\BIBentryALTinterwordspacing}{\spaceskip=\fontdimen2\font plus
\BIBentryALTinterwordstretchfactor\fontdimen3\font minus \fontdimen4\font\relax}
\providecommand{\BIBforeignlanguage}[2]{{%
\expandafter\ifx\csname l@#1\endcsname\relax
\typeout{** WARNING: IEEEtran.bst: No hyphenation pattern has been}%
\typeout{** loaded for the language `#1'. Using the pattern for}%
\typeout{** the default language instead.}%
\else
\language=\csname l@#1\endcsname
\fi
#2}}
\providecommand{\BIBdecl}{\relax}
\BIBdecl

\bibitem{shokriMembershipInferenceAttacks2017}
R.~Shokri, M.~Stronati, C.~Song, and V.~Shmatikov, ``Membership {{Inference Attacks Against Machine Learning Models}},'' in \emph{2017 {{IEEE Symposium}} on {{Security}} and {{Privacy}}}, May 2017, pp. 3--18.

\bibitem{mcmahanCommunicationEfficientLearningDeep2017}
B.~McMahan, E.~Moore, D.~Ramage, S.~Hampson, and B.~A. y~Arcas, ``Communication-{{Efficient Learning}} of {{Deep Networks}} from {{Decentralized Data}},'' in \emph{Proceedings of the 20th {{International Conference}} on {{Artificial Intelligence}} and {{Statistics}}}.\hskip 1em plus 0.5em minus 0.4em\relax {PMLR}, Apr. 2017, pp. 1273--1282.

\bibitem{konecnyFederatedLearningStrategies2017}
J.~Konečný, H.~B. McMahan, F.~X. Yu, P.~Richtárik, A.~T. Suresh, and D.~Bacon, ``Federated {{Learning}}: {{Strategies}} for {{Improving Communication Efficiency}},'' arXiv preprint, arXiv:1610.05492, Oct. 2017.

\bibitem{lianCanDecentralizedAlgorithms2017}
X.~Lian, C.~Zhang, H.~Zhang, C.-J. Hsieh, W.~Zhang, and J.~Liu, ``Can {{Decentralized Algorithms Outperform Centralized Algorithms}}? {{A Case Study}} for {{Decentralized Parallel Stochastic Gradient Descent}},'' in \emph{Advances in {{Neural Information Processing Systems}}}, vol.~30.\hskip 1em plus 0.5em minus 0.4em\relax {Curran Associates, Inc.}, 2017.

\bibitem{assranStochasticGradientPush2019a}
M.~Assran, N.~Loizou, N.~Ballas, and M.~Rabbat, ``Stochastic {{Gradient Push}} for {{Distributed Deep Learning}},'' in \emph{Proceedings of the 36th {{International Conference}} on {{Machine Learning}}}.\hskip 1em plus 0.5em minus 0.4em\relax {PMLR}, May 2019, pp. 344--353.

\bibitem{koloskovaUnifiedTheoryDecentralized2020a}
A.~Koloskova, N.~Loizou, S.~Boreiri, M.~Jaggi, and S.~Stich, ``A {{Unified Theory}} of {{Decentralized SGD}} with {{Changing Topology}} and {{Local Updates}},'' in \emph{Proceedings of the 37th {{International Conference}} on {{Machine Learning}}}.\hskip 1em plus 0.5em minus 0.4em\relax {PMLR}, Nov. 2020, pp. 5381--5393.

\bibitem{nedicDistributedGradientMethods2020}
A.~Nedic, ``Distributed {{Gradient Methods}} for {{Convex Machine Learning Problems}} in {{Networks}}: {{Distributed Optimization}},'' \emph{IEEE Signal Processing Magazine}, vol.~37, no.~3, pp. 92--101, May 2020.

\bibitem{maTrustedAIMultiagent2023}
C.~Ma, J.~Li, K.~Wei, B.~Liu, M.~Ding, L.~Yuan, Z.~Han, and H.~Vincent~Poor, ``Trusted {{AI}} in {{Multiagent Systems}}: {{An Overview}} of {{Privacy}} and {{Security}} for {{Distributed Learning}},'' \emph{Proceedings of the IEEE}, vol. 111, no.~9, pp. 1097--1132, Sep. 2023.

\bibitem{dworkAlgorithmicFoundationsDifferential2014}
C.~Dwork and A.~Roth, ``The {{Algorithmic Foundations}} of {{Differential Privacy}},'' \emph{Foundations and Trends{\textregistered} in Theoretical Computer Science}, vol.~9, no. 3{\textendash}4, pp. 211--407, Aug. 2014.

\bibitem{bassilyPrivateEmpiricalRisk2014}
R.~Bassily, A.~Smith, and A.~Thakurta, ``Private {{Empirical Risk Minimization}}: {{Efficient Algorithms}} and {{Tight Error Bounds}},'' in \emph{2014 {{IEEE}} 55th {{Annual Symposium}} on {{Foundations}} of {{Computer Science}}}, Oct. 2014, pp. 464--473.

\bibitem{abadiDeepLearningDifferential2016}
M.~Abadi, A.~Chu, I.~Goodfellow, H.~B. McMahan, I.~Mironov, K.~Talwar, and L.~Zhang, ``Deep {{Learning}} with {{Differential Privacy}},'' in \emph{Proceedings of the 2016 {{ACM SIGSAC Conference}} on {{Computer}} and {{Communications Security}}}.\hskip 1em plus 0.5em minus 0.4em\relax {ACM}, Oct. 2016, pp. 308--318.

\bibitem{mironovRenyiDifferentialPrivacy2017}
I.~Mironov, ``R{\'e}nyi {{Differential Privacy}},'' in \emph{2017 {{IEEE}} 30th {{Computer Security Foundations Symposium}}}, Aug. 2017, pp. 263--275.

\bibitem{kasiviswanathanWhatCanWe2011}
S.~P. Kasiviswanathan, H.~K. Lee, K.~Nissim, S.~Raskhodnikova, and A.~Smith, ``What {{Can We Learn Privately}}?'' \emph{SIAM Journal on Computing}, vol.~40, no.~3, pp. 793--826, Jan. 2011.

\bibitem{duchiLocalPrivacyStatistical2013}
J.~C. Duchi, M.~I. Jordan, and M.~J. Wainwright, ``Local {{Privacy}} and {{Statistical Minimax Rates}},'' in \emph{2013 {{IEEE}} 54th {{Annual Symposium}} on {{Foundations}} of {{Computer Science}}}, Oct. 2013, pp. 429--438.

\bibitem{duchiMinimaxOptimalProcedures2018}
J.~C. Duchi, M.~I. Jordan, and M.~J. Wainwright, ``Minimax {{Optimal Procedures}} for {{Locally Private Estimation}},'' \emph{Journal of the American Statistical Association}, vol. 113, no. 521, pp. 182--201, Jan. 2018.

\bibitem{imtiazDistributedDifferentiallyPrivate2021}
H.~Imtiaz, J.~Mohammadi, R.~Silva, B.~Baker, S.~M. Plis, A.~D. Sarwate, and V.~D. Calhoun, ``A {{Correlated Noise-Assisted Decentralized Differentially Private Estimation Protocol}}, and its {{Application}} to {{fMRI Source Separation}},'' \emph{IEEE Transactions on Signal Processing}, vol.~69, pp. 6355--6370, 2021.

\bibitem{biswasLowCostPrivacyAwareDecentralized2024}
S.~Biswas, D.~Frey, R.~Gaudel, A.-M. Kermarrec, D.~Ler{\'e}v{\'e}rend, R.~Pires, R.~Sharma, and F.~Ta{\"i}ani, ``{Low-Cost Privacy-Preserving Decentralized Learning},'' in \emph{{Proceedings on Privacy Enhancing Technologies}}, Jul. 2025.

\bibitem{allouah2024privacy}
Y.~Allouah, A.~Koloskova, A.~E. Firdoussi, M.~Jaggi, and R.~Guerraoui, ``{The Privacy Power of Correlated Noise in Decentralized Learning},'' in \emph{Proceedings of the 41st {{International Conference}} on {{Machine Learning}}}.\hskip 1em plus 0.5em minus 0.4em\relax {PMLR}, 2024.

\bibitem{boydRandomizedGossipAlgorithms2006}
S.~Boyd, A.~Ghosh, B.~Prabhakar, and D.~Shah, ``{Randomized Gossip Algorithms},'' \emph{IEEE Transactions on Information Theory}, vol.~52, no.~6, pp. 2508--2530, Jun. 2006.

\bibitem{boydConvexOptimization2004}
S.~Boyd and L.~Vandenberghe, \emph{Convex {{Optimization}}}.\hskip 1em plus 0.5em minus 0.4em\relax {Cambridge, UK}: {Cambridge University Press}, Mar. 2004.

\bibitem{MOSEK}
\BIBentryALTinterwordspacing
M.~ApS, \emph{MOSEK Optimizer API for Python 10.2.14}, 2025. [Online]. Available: \url{docs.mosek.com/10.2/pythonapi/index.html}
\BIBentrySTDinterwordspacing

\bibitem{alshedivatFederatedLearningPosterior2020}
M.~{Al-Shedivat}, J.~Gillenwater, E.~Xing, and A.~Rostamizadeh, ``Federated {{Learning}} via {{Posterior Averaging}}: {{A New Perspective}} and {{Practical Algorithms}},'' in \emph{International {{Conference}} on {{Learning Representations}}}, Oct. 2020.

\bibitem{changLIBSVMLibrarySupport2011}
C.-C. Chang and C.-J. Lin, ``{{{LIBSVM}}: {{A}} Library for Support Vector Machines},'' \emph{ACM Transactions on Intelligent Systems and Technology}, vol.~2, no.~3, pp. 1--27, May 2011.

\bibitem{lecun-mnisthandwrittendigit-2010}
L.~Deng, ``{The MNIST Database of Handwritten Digit Images for Machine Learning Research},'' \emph{IEEE Signal Processing Magazine}, 2012.

\end{thebibliography}

\end{document}